\newtheorem{theorem}{Theorem}
\newtheorem{lemma}{Lemma}
\theoremstyle{remark}
\newtheorem*{rem}{Remark}
\icmltitlerunning{Why Interpretability in Machine Learning?}
\begin{document}

\twocolumn[
\icmltitle{Why Interpretability in Machine Learning? \\An Answer Using Distributed Detection and Data Fusion Theory}



\icmlsetsymbol{equal}{*}

\begin{icmlauthorlist}
\icmlauthor{Kush R.\ Varshney}{ibm}
\icmlauthor{Prashant Khanduri}{syr}
\icmlauthor{Pranay Sharma}{syr}
\icmlauthor{Shan Zhang}{syr}
\icmlauthor{Pramod K.\ Varshney}{syr}
\end{icmlauthorlist}

\icmlaffiliation{ibm}{IBM Research, Yorktown Heights, New York, USA}
\icmlaffiliation{syr}{Syracuse University, Syracuse, New York, USA}

\icmlcorrespondingauthor{K.\ R.\ Varshney}{krvarshn@us.ibm.com}

\icmlkeywords{Interpretability, Distributed Detection}

\vskip 0.3in
]



\printAffiliationsAndNotice{}  

\begin{abstract}
As artificial intelligence is increasingly affecting all parts of society and life, there is growing recognition that human interpretability of machine learning models is important.  It is often argued that accuracy or other similar generalization performance metrics must be sacrificed in order to gain interpretability.  Such arguments, however, fail to acknowledge that the overall decision-making system is composed of two entities: the learned model and a human who fuses together model outputs with his or her own information.  As such, the relevant performance criteria should be for the entire system, not just for the machine learning component. In this work, we characterize the performance of such two-node tandem data fusion systems using the theory of distributed detection.  In doing so, we work in the population setting and model interpretable learned models as multi-level quantizers.  We prove that under our abstraction, the overall system of a human with an interpretable classifier outperforms one with a black box classifier.  
\end{abstract}

\section{Introduction}
\label{sec:intro}

``When you create a Human+AI team, the hard part isn't the `AI'.  It isn't even the `Human'.  It's the `+''' \citep{Case2018}.

\citet{Nirenburg2017} dichotomizes artificial intelligence (AI) systems into cognitive prostheses, ones intended to replace humans, and cognitive orthotics, ones intended to enhance human performance on tasks.  Also known as intelligence augmentation, orthotic systems are intended to collaborate with humans, and as such, must be proficient both at the task at hand and at communicating with humans.  Computer systems can communicate at rates on the order of billions of bits per second, but humans can only do so on the order of hundreds of bits per second \cite{Lawrence2018}.  A strength of humans, however, is intuition and reasoning \citep{Case2018}.  Thus, to consider an AI system successful as an augmentation for decision support, it must bring forth relevant information for the decision making task, but must also communicate at an appropriate rate and in a way that allows a human recipient of the information to tap his or her strengths of intuition and reasoning.

Arguments in recent debates have claimed that it is only the accuracy of machine learning models that matters, not their interpretability.  However, taking this view ignores the fact that the overall system in high-stakes settings is a machine learning model communicating with a human who makes the final decision, and thus it is the accuracy of the overall system that is of relevance.  Interpretable machine learning models are an appropriate means for communication between AI and human \cite{DhurandharILS2017}; the contribution of this paper is to abstractly model the overall system and theoretically show the system performance advantage of interpretable machine learning models over black box machine learning models.

In this paper, we consider the population setting (the limit as the number of samples goes to infinity, allowing access to the probability distributions of the data) and appeal to the theory of distributed detection and data fusion \cite{Varshney1997}.  We take this approach because it represents the simplest setting to understand the phenomenon without being too simple.  Examples of working in the population setting abound in the machine learning literature \cite{GrettonBRSS2006,RavikumarLLW2007,ScottBH2013,ShenderL2013,MenonW2018}.  We also restrict ourselves to binary classification for simplicity, but there is nothing fundamentally different if we consider multicategory classification.  This work should be differentiated from recent contributions that discuss hybrids of interpretable and black box models \cite{Wang2018}, because here, we are concerned with the hybrid of \emph{humans} and models.

The specific way we model the classification system is as a two-node sensor network in a tandem architecture.  The first node is the machine learning model that makes a local observation, puts it through the Bayes optimal decision rule (i.e.\ computes the likelihood ratio statistic),\footnote{It is not obvious a priori that a local Bayes decision followed by quantization is optimal in this decision making architecture, but must be proved \cite{Varshney1997,zhu2013data}.} and transmits a quantized version of this statistic to the second node, the human.  The human has an independent local observation which it fuses with the information received from the model node to produce the final decision.  The quantizer restricted to two quantization levels is used to model a black box model that can only transmit its classification.  A quantizer with more than two quantization levels is used to model an interpretable classifier; one can imagine decision trees, rule sets, local post hoc explanations, and other human interpretable model forms \cite{MalioutovVED2017} as partitions of the decision space similar to the effect of quantization of the likelihood ratio.

We prove that the Chernoff information between the two likelihood functions (class-conditional probabilities) participating in the final human decision is greater for systems with more quantization levels.  Via the Chernoff theorem, this implies that the Bayes performance of the system with more quantization levels is better.  That is: interpretable models perform better than black box models.

Note that we do not intend to imply that more levels yields greater interpretability, but only that three or more levels is an interpretable regime. In reality, a very large number of quantization levels stops being a good model for an interpretable machine learning model because humans have limits to how much information they can process. Therefore, let us assume that we are not in the regime with a large number of levels.  In addition, we note that the proposed stylized abstraction of interpretability does not differentiate between simply quantizing the output score of a black box classifier with probabilistic outputs (which is still uninterpretable) and a truly human interpretable classifier; a more extensive formulation is needed to capture this distinction and incorporate additional aspects of interpretability such as the ability to examine feature-specific errors and vagaries caused by dataset shift.  Other limitations of this work are discussed in Section \ref{sec:conclusion}.

\section{Problem Setup}
\label{sec:setup}

Consider the binary classification problem in the population setting with two nodes collaborating via a tandem network illustrated in Figure \ref{fig:system}.  
\begin{figure}
\vskip 0.2in
\begin{center}
\centerline{\includegraphics[width=\columnwidth]{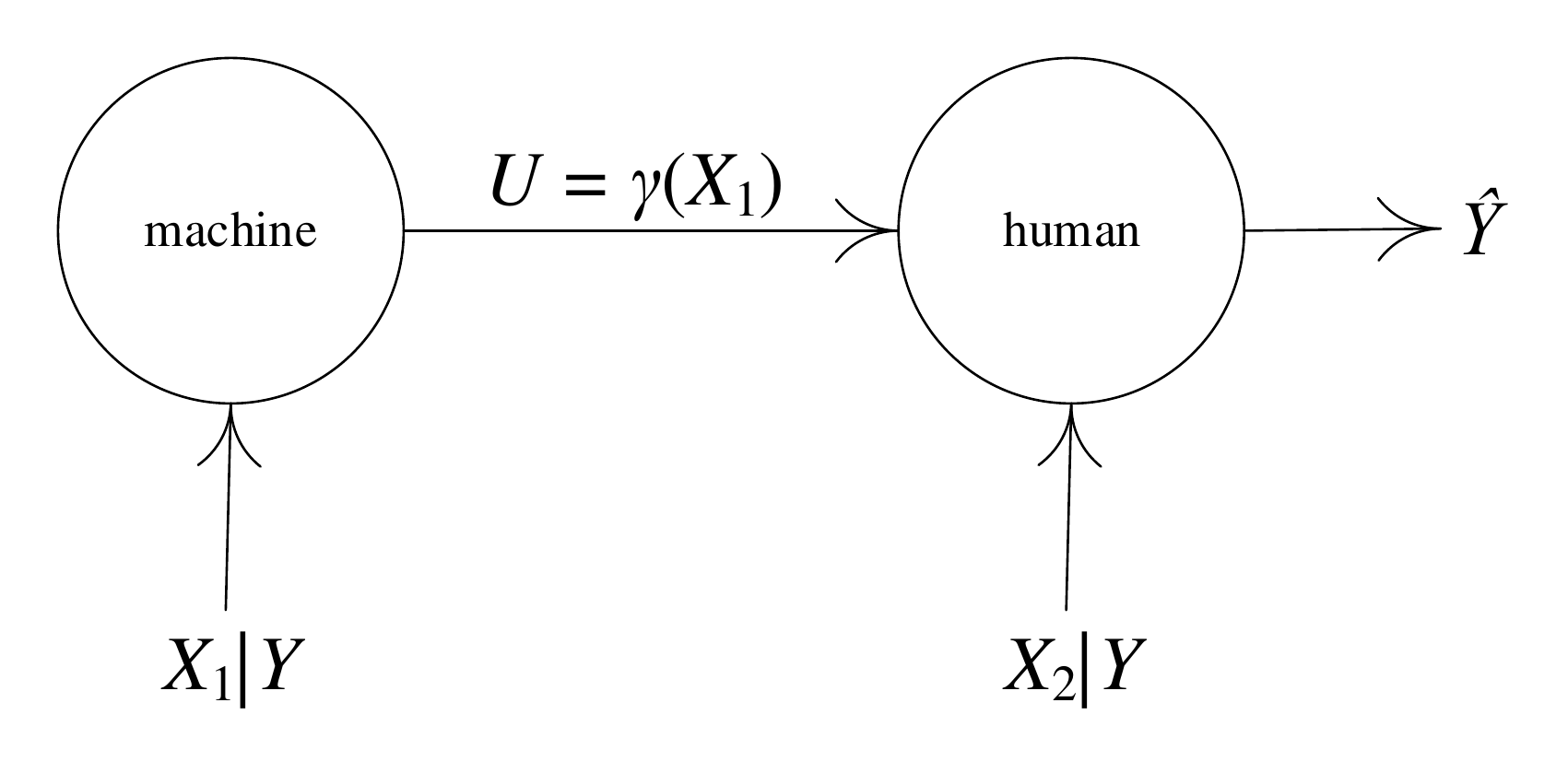}}
\caption{System model.}
\label{fig:system}
\end{center}
\vskip -0.2in
\end{figure}
Let the features observed by the two nodes, $X_1$ and $X_2$, be conditionally independent given the class label $Y \in \{0,1\}$ and governed by the likelihoods $f_{X_1\mid Y}(x_1\mid Y=y)$ and $f_{X_2\mid Y}(x_2\mid Y=y)$.  Sensor 1, a model for the machine learning model, transmits $U = \gamma(X_1)$ to sensor 2, a model for the human, where $\gamma(\cdot)$ is the composition of two functions, the likelihood ratio: 
\begin{displaymath}
\Lambda(X_1) = \frac{f_{X_1\mid Y}(x_1\mid y=1)}{f_{X_1\mid Y}(x_1\mid y=0)},
\end{displaymath}
and an optimal quantizer.  Sensor 2 acts as a fusion center and bases its classification on both $U$ and $X_2$.  This classification rule $\hat{y}(U,X_2)$ is the globally Bayes optimal likelihood ratio test that thresholds 
\begin{displaymath}
\Lambda(U,X_2) = \frac{f_{U,X_2\mid Y}(u,x_2\mid y=1)}{f_{U,X_2\mid Y}(u,x_2\mid y=0)}.
\end{displaymath}

The quantizer has $k \ge 2$ levels.  The case $k=2$ models a black box and the case $k>2$ models an interpretable model. Specifically,
\begin{equation}
U =
  \begin{cases}
    1, & \Lambda(X_1) < b_1,\\
    2, & b_1 \leq \Lambda(X_1) < b_2,\\
    \vdots & \quad \quad \vdots\\
    k,  & b_{k-1} \leq \Lambda(X_1)\\
  \end{cases}
\end{equation} 
where $\{b_1, b_2, \ldots, b_{k-1}\}$ are the quantization thresholds.

\section{Performance Characterization}
\label{sec:perf}

Our aim is to now show that the system having more quantization levels, i.e.\ larger $k$, has better classification performance.  In service of that goal, we first provide a relevant inequality and prove that having more quantization levels leads to larger Chernoff information (or Chernoff divergence) \cite{Chernoff1952} between the likelihood functions. Then we explicate how this relationship between Chernoff informations yields the conclusion of systems with interpretable classifiers performing better than systems with black box classifiers.
\begin{lemma}
\label{lem:geom}
The following inequality is satisfied by posynomial functions $f$ for $\lambda\in (0,1)$:
\begin{multline}
f \left( p_1^{1-\lambda}q_1^{\lambda},\ldots,p_n^{1-\lambda}q_n^{\lambda} \right) \\\leq f \left( p_1,\ldots,p_n \right)^{1-\lambda}  f \left( q_1,\ldots,q_n \right)^{\lambda}
\end{multline}
with equality if and only if $p_i = q_i$, $i=1,\ldots,n$.
\end{lemma}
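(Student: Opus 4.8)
The plan is to reduce the inequality to a single application of \textbf{H\"older's inequality} by exploiting the multiplicative structure of a posynomial. Write $f(x_1,\ldots,x_n)=\sum_{k=1}^{K} c_k\prod_{i=1}^{n} x_i^{a_{ki}}$ with coefficients $c_k>0$ and real exponents $a_{ki}$, and denote the $k$-th monomial by $m_k(x)=c_k\prod_i x_i^{a_{ki}}$. The first and decisive step is a purely algebraic identity: evaluating a single monomial at the weighted geometric mean $x_i=p_i^{1-\lambda}q_i^{\lambda}$ factors exactly, because
\begin{displaymath}
m_k\!\left(p^{1-\lambda}q^{\lambda}\right)=c_k\prod_i p_i^{(1-\lambda)a_{ki}}q_i^{\lambda a_{ki}}=m_k(p)^{1-\lambda}\,m_k(q)^{\lambda},
\end{displaymath}
where I have split $c_k=c_k^{1-\lambda}c_k^{\lambda}$. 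Summing over $k$ gives $f(p^{1-\lambda}q^{\lambda})=\sum_k m_k(p)^{1-\lambda}m_k(q)^{\lambda}$, so the problem is now entirely about a sum of products.

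Next I would apply H\"older's inequality with the conjugate exponents $s=1/(1-\lambda)$ and $t=1/\lambda$, which are admissible precisely because $\lambda\in(0,1)$ and $1/s+1/t=1$. Taking $a_k=m_k(p)^{1-\lambda}$ and $b_k=m_k(q)^{\lambda}$ yields
\begin{displaymath}
\sum_k m_k(p)^{1-\lambda}m_k(q)^{\lambda}\leq\Big(\sum_k m_k(p)\Big)^{1-\lambda}\Big(\sum_k m_k(q)\Big)^{\lambda}=f(p)^{1-\lambda}f(q)^{\lambda},
\end{displaymath}
which is exactly the claimed bound. An equivalent and more conceptual route is to substitute $x_i=e^{t_i}$ and observe that $\log f$ becomes the log-sum-exp of affine functions of $t$, hence convex (the standard log-convexity of posynomials from geometric programming); the stated inequality is then just convexity of $\log f$ evaluated along the segment joining $\log p$ and $\log q$ with weight $\lambda$.

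The part I expect to be the main obstacle is the equality condition. The inequality direction is routine once the factorization above is in hand, but the ``if and only if'' requires the equality case of H\"older, which holds exactly when the vectors $(m_1(p),\ldots,m_K(p))$ and $(m_1(q),\ldots,m_K(q))$ are proportional. The ``if'' direction is immediate: $p_i=q_i$ for all $i$ makes every monomial ratio equal to one. For the converse I would argue that proportionality of all monomial values forces $\sum_i a_{ki}\log(p_i/q_i)$ to be independent of $k$; descending from this to $p_i=q_i$ relies on the exponent structure of $f$ being rich enough to exclude a nontrivial common ratio, and this is the step that needs the most care. Indeed, in the degenerate single-monomial case equality holds for \emph{all} $p,q$, so some such nondegeneracy is implicitly required. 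I would therefore verify the equality clause for the specific posynomials that arise in the Chernoff-information computation, where the relevant $f$ is a sum of coordinates and proportionality reduces to equality of the likelihood ratios across the cells being merged.
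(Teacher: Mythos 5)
Your proposal is correct, but it takes a genuinely different route from the paper, which does not prove the lemma at all: the paper's ``proof'' is a one-line citation to Eq.\ 8 of Boyd et al.\ (2007), the standard log-convexity property of posynomials from geometric programming. Your argument supplies exactly what that citation hides. The monomial factorization $m_k\left(p^{1-\lambda}q^{\lambda}\right)=m_k(p)^{1-\lambda}m_k(q)^{\lambda}$ is an exact identity, the exponents $s=1/(1-\lambda)$ and $t=1/\lambda$ are conjugate for $\lambda\in(0,1)$, and a single application of H\"older then gives the bound; your alternative route via convexity of $\log f(e^{t})$ as a log-sum-exp of affine functions is precisely the argument underlying the cited equation. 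Either version is a complete, self-contained proof of the inequality, which is a strict improvement over the paper's bare citation.

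Your worry about the equality clause is not just care---it is a legitimate catch. As stated, ``equality if and only if $p_i=q_i$'' is false for general posynomials: H\"older yields equality exactly when the vectors $\left(m_1(p),\ldots,m_K(p)\right)$ and $\left(m_1(q),\ldots,m_K(q)\right)$ are proportional, so for example $f(x)=x_1+x_2$ with $q=2p$ gives equality with $p\neq q$, and a single monomial gives equality for all $p,q$. Your proposed fix---verify the equality condition for the specific $f$ arising in the Chernoff computation---is exactly what the paper needs: in Theorem \ref{thm:chernoffinformation} the lemma is applied with $f$ a sum of $\rho$ coordinates evaluated at the sub-cell probabilities, so H\"older equality requires the ratios $p'_{\rho(r-1)+i}/q'_{\rho(r-1)+i}$ to be constant across the sub-cells merged into each coarse cell. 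Since the quantizer partitions the range of $\Lambda(X_1)$, these ratios (average likelihood ratios over the sub-intervals) are distinct when the problem is learnable, which is what the paper's step (c) is implicitly invoking with its informal remark that the $p'_i$ and $q'_i$ ``are different.'' In short: your inequality proof is complete and correct, and your analysis of the equality case is more precise than both the lemma's statement and the paper's use of it.
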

\begin{proof}
This is Eq.\ 8 in \citet{BoydKVH2007}.
\end{proof}
\begin{rem}
This geometric convexity inequality is a generalization of the arithmetic mean--geometric mean inequality.
\end{rem}
\begin{theorem}
\label{thm:chernoffinformation}
Consider two learnable tandem networks as described above with different numbers of quantizer levels $k$ and $k'$ with $k' > k$ and corresponding quantized transmissions $U$ and $U'$.  Then, the following relationship among Chernoff informations holds:
\begin{multline}
C\left(f_{U',X_2 \mid Y}(u', x_2 \mid y=1) \| f_{U',X_2 \mid Y}(u', x_2 \mid y=0)\right) \\> C\left (f_{U,X_2 \mid Y}(u, x_2 \mid y=1) \| f_{U,X_2 \mid Y}(u, x_2 \mid y=0)\right).
\end{multline}
\end{theorem}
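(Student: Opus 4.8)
The plan is to reduce the comparison of the two joint Chernoff informations to a comparison of the contributions coming from $U$ and $U'$ alone, and then to show that a finer quantization of the likelihood ratio can only decrease the relevant Chernoff coefficient. Recall that the Chernoff information between two laws $P$ and $Q$ can be written as
\begin{equation}
C(P\|Q) = -\min_{\lambda\in(0,1)} \log \mu_\lambda(P,Q), \qquad \mu_\lambda(P,Q) = \int P^{1-\lambda} Q^{\lambda}.
\end{equation}
Since $X_1$ and $X_2$ are conditionally independent given $Y$ and $U=\gamma(X_1)$ is a deterministic function of $X_1$, the pair $(U,X_2)$ is conditionally independent given $Y$, so the joint class-conditional law factorizes and the Chernoff coefficient does too:
\begin{equation}
\mu_\lambda\big(f_{U,X_2\mid Y}(\cdot\mid 1),\,f_{U,X_2\mid Y}(\cdot\mid 0)\big) = \mu_\lambda(U)\,\mu_\lambda(X_2),
\end{equation}
where $\mu_\lambda(U)=\sum_{j} p_j^{1-\lambda} q_j^{\lambda}$ with $p_j=f_{U\mid Y}(j\mid 1)$, $q_j=f_{U\mid Y}(j\mid 0)$, and $\mu_\lambda(X_2)$ is the quantizer-independent contribution of the human's observation. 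Taking logarithms, the $X_2$ term is common to both systems, so it suffices to compare $\log\mu_\lambda(U')$ against $\log\mu_\lambda(U)$ for every $\lambda$.

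Next I would prove that refining the quantizer decreases $\mu_\lambda(U)$ pointwise in $\lambda$. Suppose a coarse quantizer cell is split into finer cells indexed by $i\in S$ with class-conditional masses $(p_i',q_i')$, so that the coarse masses are $\big(\sum_{i\in S}p_i',\sum_{i\in S}q_i'\big)$. Applying Lemma~\ref{lem:geom} with the posynomial $f(z_1,\dots,z_n)=\sum_i z_i$ gives
\begin{equation}
\sum_{i\in S}(p_i')^{1-\lambda}(q_i')^{\lambda} \;\le\; \Big(\sum_{i\in S}p_i'\Big)^{1-\lambda}\Big(\sum_{i\in S}q_i'\Big)^{\lambda},
\end{equation}
i.e. the finer contribution is no larger than the merged one, and the inequality is strict whenever the split genuinely separates cells of differing likelihood ratio $p_i'/q_i'$. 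Summing over all cells shows $\mu_\lambda(\text{fine})\le\mu_\lambda(\text{coarse})$ for every $\lambda\in(0,1)$. Since $\log\mu_\lambda(X_2)$ is added to both, the functions $g(\lambda):=\log\mu_\lambda(\text{fine})+\log\mu_\lambda(X_2)$ and $h(\lambda):=\log\mu_\lambda(\text{coarse})+\log\mu_\lambda(X_2)$ satisfy $g\le h$ pointwise; minimizing over $\lambda$ and negating yields $C(\text{fine})\ge C(\text{coarse})$, and the separation of distinct likelihood ratios makes $g<h$ on the interior where the minima are attained, giving strictness.

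Finally I would bridge from refinement to the optimal quantizers named in the theorem. The delicate point—and what I expect to be the main obstacle—is that the optimal $k'$-level quantizer need not be a refinement of the optimal $k$-level quantizer, so the pointwise monotonicity above does not apply directly to the two named systems. I would resolve this by constructing an intermediate $k'$-level quantizer $\tilde\gamma$ that \emph{does} refine the optimal $k$-level quantizer, obtained by subdividing one of its cells with $k'-k$ additional thresholds placed where $\Lambda(X_1)$ genuinely varies. The previous step then gives $C(\tilde U,X_2) > C(U,X_2)$, and optimality of the $k'$-level system—optimality understood with respect to the fused Chernoff information, which via the Chernoff theorem governs the Bayes error exponent—gives $C(U',X_2)\ge C(\tilde U,X_2)$. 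Chaining these yields the claimed strict inequality. The remaining care is in verifying that a strictly separating subdivision always exists and that quantizer optimality is indeed the Chernoff-information-maximizing notion; both follow because the quantizer acts on the likelihood ratio statistic itself, whose value is non-degenerate across at least one refinable cell.
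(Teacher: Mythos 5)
Your proof is correct, and its backbone coincides with the paper's: factor out the $X_2$ contribution using conditional independence, then apply Lemma~\ref{lem:geom} to the sum posynomial to show that merging quantizer cells can only increase $\sum_j p_j^{1-\lambda}q_j^{\lambda}$, so that refinement increases the Chernoff information, with strictness supplied by learnability. You diverge from the paper in two places, and in both your route is the more careful one. First, the paper transfers from the marginal comparison (on $U$ alone) to the joint comparison via its equation~\eqref{eq:C1}, which asserts that Chernoff information is additive over the conditionally independent pair $(U,X_2)$; unlike Kullback--Leibler divergence, Chernoff information is in general only \emph{sub}additive over independent non-identical components, because the single optimizing $\lambda$ for the product need not coincide with the optimizers of the two factors. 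Your argument factorizes the Chernoff \emph{coefficient} $\mu_\lambda$ instead and compares pointwise in $\lambda$ before minimizing: refinement gives $\mu_\lambda(\tilde U) < \mu_\lambda(U)$ for every $\lambda\in(0,1)$, hence $g<h$ pointwise for the joint objectives, and the strict inequality survives the minimization. This is exactly what is needed and quietly repairs the paper's step. Second, the paper handles the fact that the two quantizers need not be nested by declaring ``without loss of generality'' that both are uniform with $k'$ an integer multiple of $k$---a WLOG that does real work, since the optimal $k'$-level thresholds need not refine the optimal $k$-level ones. You confront this directly: build an intermediate $k'$-level quantizer $\tilde\gamma$ refining the optimal $k$-level one, obtain $C(\tilde U, X_2) > C(U, X_2)$ from the refinement step, and chain with $C(U',X_2) \ge C(\tilde U, X_2)$ by optimality of the $k'$-level system. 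This closes the gap cleanly, with the one caveat you rightly flag: it requires ``optimal quantizer'' to mean Chernoff-information- (i.e.\ error-exponent-) maximizing; under a single-shot Bayes-error notion of optimality the last inequality would need separate justification---a looseness the paper itself shares in passing from Theorem~\ref{thm:chernoffinformation} to Theorem~\ref{thm:result}. A final small point in your favor: your strictness condition---the subdivision must separate subcells with \emph{different likelihood ratios} $p'_i/q'_i$---is the correct H\"older-type equality condition for the sum posynomial (proportional subcell masses yield equality even when $p'_i \neq q'_i$), and is slightly sharper than the paper's blanket appeal to the $p'_i$ and $q'_i$ being different under learnability.
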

\begin{proof} Since $X_1$ and $X_2$ are conditionally independent, for $k$-level quantization, we have:
\begin{multline}
\label{eq:C1}
C\left(f_{U,X_2 \mid Y}(u, x_2 \mid y=1) \| f_{U,X_2 \mid Y}(u, x_2 \mid y=0)\right) \\= C\left(f_{U \mid Y}(u \mid y=1) \| f_{U \mid Y}(u \mid y=0)\right) \\+ C\left(f_{X_2 \mid Y}(x_2 \mid y=1) \| f_{X_2 \mid Y}(x_2 \mid y=0)\right).
\end{multline}

The second term in \eqref{eq:C1} does not depend on the quantization levels, so we focus only on the first term involving $U$. Recalling that $U$ is a discrete random variable taking values $\{1,\ldots,k\}$, this first term is given by
\begin{multline}
\label{eq:C2}
C\left(f_{U \mid Y}(u \mid y=1) \| f_{U \mid Y}(u \mid y=0)\right) \\= -\log\min_{\lambda\in(0,1)} \sum_{j= 1}^k p_j^{1-\lambda} q_j^{\lambda},
\end{multline}
where $p_j = P(u=j \mid y=1)$ and $q_j = P(u=j \mid y= 0)$.  Similarly, for $k'$-level quantization, we have:
\begin{multline}
\label{eq:C3}
C\left(f_{U' \mid Y}(u' \mid y=1) \| f_{U' \mid Y}(u' \mid y=0)\right) \\= -\log\min_{\lambda\in(0,1)} \sum_{i= 1}^{k'} {p'_i}^{1-\lambda} {q'_i}^{\lambda},
\end{multline}
where $p'_i = P(u'=i \mid y=1)$ and $q'_i = P(u'=i \mid y= 0)$.

Without loss of generality, assume that the quantizer is a uniform quantizer.  Then,
\begin{align*}
q_j &= P \left( \Lambda(X_1) \in \left[ \tfrac{j-1}{k}, \tfrac{j}{k} \right) \mid y = 0 \right), \\
p_j &= P \left( \Lambda(X_1) \in \left[ \tfrac{j-1}{k}, \tfrac{j}{k} \right) \mid y = 1 \right),
\end{align*}
for $j  = 1,\ldots, k$, and
\begin{align*}
q'_i &= P \left( \Lambda(X_1) \in \left[ \tfrac{i-1}{k'}, \tfrac{i}{k'} \right) \mid y = 0 \right), \\
p'_i &= P \left( \Lambda(X_1) \in \left[ \tfrac{i-1}{k'}, \tfrac{i}{k'} \right) \mid y = 1 \right),
\end{align*}
where $i = 1,\ldots, k'$.

For $k$-level quantization, an interval $\left[ \frac{r-1}{k}, \frac{r}{k} \right)$ contains $\rho = k'/k$ intervals of length $1/k'$.\footnote{For the sake of simplicity, we assume $k'$ to be an integer multiple of $k$. The proof holds if that is not the case but requires additional bookkeeping.} Therefore, we have
\begin{equation}
\label{eq:union}
\left[ \tfrac{r-1}{k}, \tfrac{r}{k} \right) = \bigcup_{i=1}^{\rho} \left[ \tfrac{\rho (r-1) +i-1}{k'}, \tfrac{\rho (r-1) +i}{k'} \right).
\end{equation}

Then, using (a) the definition of $q_j$, (b) equation \eqref{eq:union}, (c) the disjoint property of intervals, and (d) the definition of $q'_i$:
\begin{align}
\label{eq:qrho}
q_r &\overset{(a)}{=} P\left(\Lambda(X_1) \in  \left[ \tfrac{r-1}{k}, \tfrac{r}{k} \right) \mid y = 0\right)\nonumber\\
&\overset{(b)}{=} P \left(\Lambda(X_1) \in  \bigcup_{i=1}^{\rho} \left[ \tfrac{\rho (r-1) + i - 1}{k'}, \tfrac{\rho (r-1)  +i}{k'} \right)  \mid y = 0\right) \nonumber\\
&\overset{(c)}{=} \sum_{i=1}^{\rho} P \left(\Lambda(X_1) \in  \left[ \tfrac{\rho (r-1) +i-1}{k'}, \tfrac{\rho (r-1)  +i}{k'} \right) \mid y = 0\right) \nonumber\\
&\overset{(d)}{=} \sum_{i=1}^{\rho} q'_{\rho (r-1) +i}.
\end{align}
Similarly,
\begin{equation}
\label{eq:prho}
p_r = \sum_{i=1}^{\rho} p'_{\rho (r-1)  +i}.
\end{equation}

Then, using (a) equation \eqref{eq:C2}, (b) equations \eqref{eq:qrho} and \eqref{eq:prho}, (c) the geometric convexity inequality of Lemma \ref{lem:geom}, (d) collecting all the $p'_i$ by summing $r$ over $1,\ldots,k$ and $i$ over $1,\ldots,\rho$, and (e) equation \eqref{eq:C3}:
\begin{align*}
&C\left(f_{U \mid Y}(u \mid y=1) \| f_{U \mid Y}(u \mid y=0)\right)\\ &\overset{(a)}{=} -\log\min_{\lambda\in(0,1)}\sum_{r= 1}^k p_r^{1-\lambda} q_r^{\lambda} \\
&\overset{(b)}{=} -\log\min_{\lambda\in(0,1)} \sum_{r=1}^{k} \left( \sum_{i=1}^{\rho} p'_{\rho (r-1) +i} \right)^{1-\lambda} {\left( \sum_{i=1}^{\rho} q'_{\rho (r-1) +i} \right)}^{\lambda} \\
&\overset{(c)}{<} -\log\min_{\lambda\in(0,1)} \sum_{r=1}^{k} \sum_{i=1}^{\rho} {p'_{\rho (r-1) +i}}^{1-\lambda} {q'_{\rho (r-1) +i}}^{\lambda} \\
&\overset{(d)}{=} -\log\min_{\lambda\in(0,1)} \sum_{i=1}^{k'} {p'_i}^{1-\lambda}{q'_i}^{\lambda} \\
&\overset{(e)}{=} C\left(f_{U' \mid Y}(u' \mid y=1) \| f_{U' \mid Y}(u' \mid y=0)\right).
\end{align*}
Step (c) is a strict inequality because the $p'_i$ and the $q'_i$ are different when the classification task is learnable.  The result follows by reintroducing the second terms of equation \eqref{eq:C1}.
\end{proof}
\begin{theorem}
\label{thm:chernoff}
The best achievable exponent in the Bayesian probability of error in a binary classification problem with class labels $Y$ and features $X$ is $C\left(f_{X \mid Y}(x \mid y=1) \| f_{X \mid Y}(x \mid y=0)\right)$.
\end{theorem}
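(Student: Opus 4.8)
The statement is the classical Chernoff theorem for binary hypothesis testing, so the plan is to reproduce its proof in the present notation, writing the two hypotheses as $f_1 := f_{X\mid Y}(x\mid y=1)$ and $f_0 := f_{X\mid Y}(x\mid y=0)$ with priors $\pi_1,\pi_0$. The phrase \emph{best achievable exponent} refers to the asymptotics of the Bayes error $P_e^{(n)}$ when $n$ conditionally i.i.d.\ feature vectors are observed; the error-minimizing rule is the likelihood-ratio (MAP) test. I would therefore establish two matching halves: an \textbf{achievability} bound showing $\liminf_n -\tfrac{1}{n}\log P_e^{(n)} \ge C$, and a \textbf{converse} showing the exponent cannot exceed $C$. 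Since the Bayes rule minimizes $P_e^{(n)}$, it suffices to sandwich its exponent from both sides.

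For achievability, write $P_e^{(n)} = \pi_1 P_1(\text{decide } 0) + \pi_0 P_0(\text{decide } 1)$, where the decision regions are governed by a threshold on the product likelihood ratio $\prod_m f_1(x_m)/f_0(x_m)$. I would apply the Chernoff bounding technique, i.e.\ Markov's inequality to $\exp\!\big(\lambda \sum_m \log(f_1/f_0)\big)$, to each error term. By conditional independence the bound factorizes, so each error probability is at most $e^{-n\,C_\lambda}$ with $C_\lambda = -\log\int f_1^{1-\lambda} f_0^{\lambda}\,dx$, up to the $n$-independent prior prefactors. Optimizing over $\lambda\in(0,1)$ collapses the prefactors into the $o(n)$ term and yields exponent at least $C$.

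For the converse, I would introduce the tilted (exponential-family) distribution $f_\lambda(x) \propto f_1(x)^{1-\lambda} f_0(x)^{\lambda}$ and pick $\lambda^*$ minimizing the Chernoff integral; at $\lambda^*$ the two divergences balance, $D(f_{\lambda^*}\|f_1) = D(f_{\lambda^*}\|f_0) = C$. The plan is to show even the Bayes rule errs with probability at least $e^{-nC(1+o(1))}$: sequences whose empirical type is near $f_{\lambda^*}$ carry probability of order $e^{-nC}$ under \emph{both} hypotheses, so whichever label the rule assigns to that type it incurs error of that order. I would make this rigorous via Sanov's theorem applied to the type class surrounding $f_{\lambda^*}$ (equivalently, Cramér's large-deviation theorem for the log-likelihood ratio), giving exponent at most $C$.

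The \textbf{main obstacle} is the converse. The upper bound on the error is a routine Chernoff computation, but the matching lower bound requires controlling the large-deviation rate of the ``equidistant'' type $f_{\lambda^*}$ and verifying that the minimizer $\lambda^*$ lies in the open interval $(0,1)$. This interiority is exactly what learnability buys us: since $f_1 \neq f_0$, the Chernoff integral is strictly convex with a strict interior minimum, so $C$ is positive and finite and the balancing of divergences is genuine. I would lean on Sanov's theorem to close this gap cleanly rather than attempting a bare second-moment argument.
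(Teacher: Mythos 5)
Your proposal is correct, but it takes a different route from the paper in a somewhat degenerate sense: the paper does not prove this statement at all. Its entire proof reads ``Known as the Chernoff Theorem, this is Theorem 11.9.1 in \citet{CoverT2006},'' i.e.\ the result is classical and is delegated to the textbook. What you have written is essentially a faithful reconstruction of that textbook's proof: achievability by the Chernoff bounding technique applied to the likelihood-ratio (MAP) test, with the priors correctly absorbed into $o(n)$ prefactors, and the converse via the tilted distribution $f_{\lambda^*}$ at which $D(f_{\lambda^*}\|f_1)=D(f_{\lambda^*}\|f_0)=C$, lower-bounding the error of \emph{any} decision rule through the probability of sequences whose type sits near $f_{\lambda^*}$. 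You also correctly identified the converse as the substantive half and read ``best achievable exponent'' asymptotically in $n$ conditionally i.i.d.\ observations, exactly as in the cited theorem. Two refinements if you were to write this in full. First, your interiority worry resolves more cleanly than you suggest: by H\"{o}lder's inequality, $\int f_1^{1-\lambda}f_0^{\lambda}\,dx \le 1$ for $\lambda\in(0,1)$ with equality iff $f_0=f_1$ a.e., and since $\psi(\lambda)=\log\int f_1^{1-\lambda}f_0^{\lambda}\,dx$ vanishes at both endpoints, learnability ($f_0\neq f_1$) alone forces the minimizer into the open interval. Second, the method-of-types converse in \citet{CoverT2006} is stated for finite alphabets; since the features $X$ here may be continuous, the Cram\'{e}r/Sanov route you mention parenthetically is genuinely needed rather than optional, and disjoint-support edge cases (where $C=\infty$) should be set aside explicitly. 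What the paper's citation buys is brevity and focus on its actual novelty (Theorem \ref{thm:chernoffinformation}); what your reconstruction buys is self-containedness and makes explicit where the regularity assumptions enter.
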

\begin{proof}
Known as the Chernoff Theorem, this is Theorem 11.9.1 in \citet{CoverT2006}.
\end{proof}
\begin{theorem}
\label{thm:result}
The probability of error in the tandem classification network described above with $k = 2$ quantizer levels is larger than the network with $k' > 2$ quantizer levels.
\end{theorem}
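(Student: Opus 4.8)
The plan is to combine the two results already in hand: Theorem~\ref{thm:chernoffinformation}, which strictly orders the Chernoff informations at the fusion center, and Theorem~\ref{thm:chernoff}, which identifies the Chernoff information as the best achievable error exponent.

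First I would pin down the feature on which each fusion center acts. In the black-box ($k=2$) system, sensor~2 makes its globally Bayes optimal decision from the pair $(U,X_2)$, with class-conditional densities $f_{U,X_2\mid Y}(u,x_2\mid y=j)$; in the interpretable ($k'>2$) system it acts on $(U',X_2)$. Invoking Theorem~\ref{thm:chernoff} with the feature $X$ taken to be $(U,X_2)$ and then $(U',X_2)$, the best achievable Bayesian error exponents of the two systems are precisely $C_k := C\!\left(f_{U,X_2\mid Y}(u,x_2\mid y=1)\,\|\,f_{U,X_2\mid Y}(u,x_2\mid y=0)\right)$ and $C_{k'} := C\!\left(f_{U',X_2\mid Y}(u',x_2\mid y=1)\,\|\,f_{U',X_2\mid Y}(u',x_2\mid y=0)\right)$.

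Next I would apply Theorem~\ref{thm:chernoffinformation} with this $k=2$ and $k'>2$ to obtain $C_{k'}>C_k$. By the Chernoff theorem the probabilities of error obey $\tfrac1n\log P_e^{(n)}\to -C_k$ and $\tfrac1n\log P_{e'}^{(n)}\to -C_{k'}$ in the number of observations $n$; since the exponent enters with a negative sign, the strictly larger $C_{k'}$ forces a strictly faster exponential decay, so $P_{e'}^{(n)}<P_e^{(n)}$ for all sufficiently large $n$, which is the asserted ordering.

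The main obstacle is conceptual rather than computational: the ordering of Chernoff informations only converts into an ordering of error probabilities through the asymptotic statement of the Chernoff theorem, so the inequality on probabilities of error must be read as an inequality of error exponents (equivalently, as holding once enough observations are available). A secondary point requiring care is that each exponent equals the Chernoff information of $(U,X_2)$ or $(U',X_2)$ only because sensor~2 uses the globally Bayes optimal fusion rule while sensor~1 applies the Bayes local rule followed by quantization; the optimality of this local-decision-then-quantize structure is exactly the fact flagged in the footnote of Section~\ref{sec:setup}, and it is what justifies identifying the system's effective feature with $(U,X_2)$ or $(U',X_2)$ when Theorem~\ref{thm:chernoff} is applied.
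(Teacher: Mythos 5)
Your proposal is correct and takes exactly the paper's route: the paper proves Theorem~\ref{thm:result} in one line as a direct consequence of Theorem~\ref{thm:chernoffinformation} and Theorem~\ref{thm:chernoff}, which is precisely the combination you carry out. Your added care in noting that the Chernoff theorem orders error \emph{exponents} (so the probability-of-error inequality holds asymptotically, for sufficiently many observations) is a legitimate refinement the paper leaves implicit, not a departure from its argument.
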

\begin{proof}
This is a direct consequence of Theorem \ref{thm:chernoffinformation} and Theorem \ref{thm:chernoff}.
\end{proof}
\begin{rem}
This analysis makes no assumption about the relative quality of observations $X_1$ and $X_2$ made by the machine learning model and the human respectively.  It continues to hold even if the two are very differently distributed (even on different variables), and the human features $X_2$ are very noisy\hspace{1pt}---\hspace{1pt}possibly relating to some intuition that is difficult to pin down and represent as data.
\end{rem}

\begin{rem}
This analysis is for the Bayesian detection setting, which is the standard for supervised classification in machine learning.  The detection theory literature is often oriented towards the Neyman--Pearson setting, which does occasionally also arise in machine learning \cite{RigolletT2011}.  The current analysis can be repeated for the Neyman--Pearson paradigm with only minor changes: switching Chernoff information to Kullback--Leibler divergence, switching Lemma \ref{lem:geom} to the log-sum inequality, and switching Theorem \ref{thm:chernoff} to the Chernoff-Stein Lemma (Theorem 11.8.3 in \citet{CoverT2006}).
\end{rem}

\section{Related Work in Distributed Detection and Estimation}

The motivation for this study is to develop an understanding of the human--machine decision-making team in the presence of interpretable models, but it also provides a new contribution to the distributed detection and data fusion literature.  Although two-node tandem sensor networks with quantization have been studied before, the analysis conducted in Section~\ref{sec:perf} has not been done.  

\citet{zhu2013data} investigated the problem of sufficiency-based data reduction in tandem fusion systems with quantization.  They showed that quantizing the sufficient statistics achieves the same optimal inference performance as quantizing the raw observations.  Their results applied to systems with conditionally independent observations and also to conditionally dependent observations under certain conditions.  It is because of this result that we can equivalently quantize either $\Lambda(X_1)$ or $X_1$ in this work.  This paper did not, however, characterize the difference in inference performance for different numbers of quantization levels as we do here.

Several works study the problem of whether the noisier sensor or the less noisy sensor in a two-sensor tandem fusion system should optimally perform the fusion and take the final decision \cite{akofor2013optimal,zhu2013interactive, akofor2013interactive,song2007some, song2009performance}.  In many settings, it is the less noisy sensor that should optimally make the decision, but this is not universally true.\footnote{Scholars have raised this same question in discussing the collaboration of humans and AI in decision making.  In this context, Kahneman recently stated \cite{MITIDE2018}, ``You can combine humans and machines, provided the machine has the last word!''}

Finally, \citet{ShenVZ2012} and \citet{ShenVZ2014} study a problem similar to ours with continuous $Y$, i.e.\ regression or estimation, and thus have characterizations hinging on Fisher information rather than Chernoff information as in the analysis herein.  This work, like all of the others cited in this section, does not relate the analysis to interpretable machine learning and human decision making.

\section{Limitations and Conclusion}
\label{sec:conclusion}

In this paper, we have modeled the overall decision-making procedure involving humans and AI systems as one involving quantized communication from the AI to the human who makes the final decision.  For analysis purposes, we have considered the population setting in which we can examine the probability distributions involved, thereby avoiding the complexities in analyzing the finite data sample regime.  We have shown that interpretable AI (taken to be systems with more than two quantization levels) yields lower probability of error than black box AI (taken to be systems with two quantization levels).  

One limitation of this work is that we have assumed that the two nodes, the human and the machine, have features that are independent conditioned on the label.  However, it is quite reasonable to imagine that the two feature sets would be correlated, perhaps even strongly so and even statistically dependent.  It is our conjecture that we can analyze the conditionally dependent case using \citet{zhu2013data} as a starting point and following the approach that allows \citet{ShenVZ2012} (conditionally independent measurements) to be extended to \citet{ShenVZ2014} (conditionally dependent measurements), and that the main conclusion would not change.

Another limitation comes from working in the population setting.  As discussed earlier, when we are in this setting, an optimal (scalar) quantizer of the sufficient statistic and an optimal (vector) quantizer of the raw features yield equivalent performance.  The sufficient statistic represents a perfect Bayes classifier and the optimal quantizer of the raw features also somehow captures perfect Bayes classification. Therefore, it is as if black boxes, post hoc interpretations, and directly interpretable models all have equivalent accuracies which is not necessarily true with finite training data.  To extend the current analysis to the finite sample case, \citet{NguyenWJ2005} and \citet{PreddKP2006} may prove instructive; although they are for more general topologies than two-node tandems, which tend to introduce many simplifications and allow for analysis that may not otherwise be possible.


\bibliography{distdetectinterp}
\bibliographystyle{icml2018}

\end{document}